\newcommand{\OPT}{\mathsf{OPT}}
\newcommand{\Plus}{\texttt{+}}
\newcommand{\Minus}{\texttt{-}}
\newcommand{\Cplus}{\CCC^\Plus}
\newcommand{\Cminus}{\CCC^\Minus}
\newcommand{\espPlus}{\mathrm{ESP}^\Plus}
\newcommand{\espMinus}{\mathrm{ESP}^\Minus}
\newcommand{\logtree}{\mathsf{tree}}
\newcommand{\EEi}{{{\EE}_{\normalfont\text{init}}}}
\newcommand{\GGi}{{\GG}_{\normalfont\text{init}}}
\newcommand{\LLi}{{\LL}_{\normalfont\text{init}}}
\newcommand{\Rset}{\mathbb{R}}
\newcommand{\GG}{\mathcal{G}}
\newcommand{\VV}{\mathcal{V}}
\newcommand{\TT}{\mathcal{T}}
\newcommand{\EE}{{\mathcal{E}}}
\newcommand{\xx}{\mathbf{x}}
\newcommand{\LL}{\mathbf{L}}
\newcommand{\MM}{\mathbf{M}}
\newcommand{\ES}{\mathbf{S}}
\newcommand{\WW}{\mathbf{W}}
\newcommand{\ppp}{\boldsymbol\pi}
\newcommand{\AAA}{\mathbf{A}}
\newcommand{\AAAT}{\mathbf{A}^{\hspace{-0.07cm}\top}}
\newcommand{\Acal}{\mathcal{A}}
\newcommand{\Bcal}{\mathcal{B}}
\newcommand{\aaa}{\mathbf{a}}
\newcommand{\ee}{\mathbf{e}}
\newcommand{\II}{\mathbf{I}}
\newcommand{\CCC}{\mathcal{C}}
\DeclareMathAlphabet\mathbfcal{OMS}{cmsy}{b}{n}
\newcommand{\diag}{\mathop{\mathrm{diag}}}
\DeclareFontFamily{OT1}{pzc}{}
\DeclareFontShape{OT1}{pzc}{m}{it}{<-> s * [1.2] pzcmi7t}{}
\DeclareMathAlphabet{\mathpzc}{OT1}{pzc}{m}{it}
\algrenewcommand\textproc{}%
\begin{document}
\mainmatter              
\title{Designing Sparse
  Reliable Pose-Graph SLAM:\\A Graph-Theoretic Approach}
\titlerunning{Near-$t$-Optimal Graphs}  
\author{Kasra Khosoussi\inst{1} \and Gaurav S.~Sukhatme\inst{2}\\[0.1cm]%
 Shoudong Huang\inst{1} \and Gamini Dissanayake\inst{1}}
\authorrunning{Kasra Khosoussi et al.} 

\institute{Centre for Autonomous Systems\\ University of Technology
  Sydney\\Sydney, NSW 2007, Australia\\
  \email{kasra.khosoussi@uts.edu.au}\\[0.2cm]
\and
Department of Computer Science\\
University of Southern California\\
Los Angeles, CA 90089, USA}

\maketitle              

\begin{abstract}
In this paper, we aim to design sparse D-optimal (determinant-optimal)
pose-graph SLAM problems through the synthesis of sparse graphs with the maximum
weighted number of spanning trees. Characterizing graphs with the maximum number
of spanning trees is an open problem in general. To tackle this problem, several
new theoretical results are established in this paper, including the monotone
log-submodularity of the weighted number of spanning trees. By exploiting these
structures, we design a complementary pair of near-optimal efficient
approximation algorithms with provable guarantees. Our theoretical results are
validated using random graphs and a publicly available
pose-graph SLAM dataset.
  \keywords{Number of Spanning Trees, D-Optimal Pose-Graph SLAM, Approximation
  Algorithms}
\end{abstract}

\section{Introduction}
Graphs arise in modelling numerous phenomena across science and engineering.
In
particular, estimation-on-graph (EoG) is a class of (maximum likelihood) estimation problems with a
natural graphical representation that arise especially in robotics and sensor
networks. In such problems, each vertex corresponds to an unknown state, and
each edge corresponds to a relative noisy measurement between the
corresponding states. Simultaneous localization and mapping (SLAM) and sensor
network localization (SNL) are two well-studied EoGs.

Designing sparse, yet ``well-connected'' graphs is a subtle task that frequently
arises in various domains.  First, note that graph sparsity---in EoGs and many
other contexts---lead to
computational efficiency. Hence, maintaining sparsity is often crucial.
It is useful to see
graph connectivity as a spectrum, as we often need to compare the connectivity
of connected graphs.  In engineering, well-connected graphs often exhibit
desirable qualities such as \emph{reliability}, \emph{robustness}, and
\emph{resilience} to noise, outliers, and link failures.
More specifically, a well-connected EoG is more resilient to a fixed noise level and
results in a more reliable estimate (i.e., smaller estimation-error covariance
in the Loewner ordering sense).
Consequently,
maintaining a sufficient connectivity is also critical.  
Needless to say,
sparsity is, by its very essence, at odds with well-connectivity.
This is the case in SLAM, where there is a trade-off between the cost of
inference and the reliability of the resulting estimate.
This problem is not new.
Measurement selection and pose-graph pruning
have been extensively studied in the SLAM literature (see, e.g.,
\cite{6094414,huang2013consistent}). However, in this paper we take a novel
graph-theoretic approach by reducing the problem of designing sparse reliable
SLAM problems to the purely combinatorial problem of synthesizing sparse, yet
well-connected graphs. 
In what follows, we briefly justify this
approximate reduction.

First, note that by estimation reliability we refer to the
standard D-optimality criterion, defined as the determinant of the (asymptotic)
maximum likelihood estimator covariance matrix. D-optimality is a standard
and popular design criterion; see, e.g., \cite{Joshi2009,Khosoussi2014} and the
references therein. Next, we have to
specify how we measure graph connectivity.
Among the
existing combinatorial and spectral graph connectivity criteria, the number of spanning trees (sometimes
referred to as \emph{graph complexity} or \emph{tree-connectivity}) stands out: despite its
combinatorial origin, it can also be characterized solely by the spectrum of the
graph Laplacian \cite{Godsil2001}.
In \cite{Khosoussi2014,Khosoussi2015graph,kasra16icra}, we shed light on the
connection between the D-criterion in SLAM---and some other EoGs---and
the tree-connectivity of the underlying graph. Our theoretical and empirical
results demonstrate that, under some standard conditions, D-optimality in SLAM
is \emph{significantly} influenced by the tree-connectivity of the graph
underneath. Therefore, one can accurately estimate the D-criterion without using any
information about the robot's trajectory or realized measurements (see
Section~\ref{sec:robot}). Intuitively speaking, our approach can be seen as a
dimensionality reduction scheme for designing D-optimal SLAM problems from the
joint space of trajectories and graph topologies to only the space of graph
topologies \cite{kasra16icra}.

Although this work is specifically motivated by the SLAM problem, 
designing sparse graphs with the maximum tree-connectivity has several other
important applications.
For example,
it has been shown that tree-connectivity is associated with the
D-optimal incomplete block designs \cite{gaffke1982d,cheng1981maximizing,bailey2009combinatorics}.
Moreover, tree-connectivity is a major factor in maximizing the connectivity of
certain random graphs that model unreliable networks under random link failure
(\emph{all-terminal network reliability})
\cite{kelmans1983multiplicative,weichenberg2004high}. In particular, a classic
result in network reliability theory states that  if the \emph{uniformly-most reliable}
network exits, it must have the maximum tree-connectivity among all graphs with
the same size \cite{bauer1987validity,myrvold1996reliable,boesch2009survey}.  
\vspace{-14pt}
\subsubsection*{Known Results.}
Graphs with the maximum weighted number of spanning trees among a family of
graphs with the same vertex set are called
\emph{$t$-optimal}. The problem of
characterizing unweighted $t$-optimal graphs among the set of graphs with $n$
vertices and $m$ edges remains open and has been solved \emph{only} for specific
pairs of $n$ and $m$; see, e.g.,
\cite{shier1974maximizing,cheng1981maximizing,kelmans1996graphs,petingi2002new}.
The span of these special cases is too narrow for the types of graphs that
typically arise in SLAM and sensor networks. Furthermore, in many cases the
$(n,m)$ constraint alone is insufficient for describing the true set of
“feasible” graphs and cannot capture implicit practical constraints that exist
in SLAM. Finally, it is not clear how these results can be extended to the
case of (edge) weighted graphs, which are essential for representing SLAM
problems, where the weight of each edge represents the precision of the
corresponding pairwise measurement \cite{kasra16icra}.
\vspace{-15pt}
\subsubsection*{Contributions.}
This paper addresses the problem of designing sparse $t$-optimal graphs with the ultimate
goal of designing D-optimal pose-graph SLAM problems. First and foremost, we
formulate a combinatorial optimization problem that captures the measurement
selection and measurement pruning scenarios in SLAM. Next, we prove that the weighted
number of spanning trees, under certain conditions, is a monotone log-submodular function
of the edge set. To the best of our knowledge, this is a new result in graph theory.
Using this result, we prove that the greedy algorithm is near-optimal. In our
second approximation algorithm, we
formulate this problem as an integer program that admits a straightforward convex relaxation.
Our analysis sheds light on the performance of a simple deterministic rounding
procedure that have also been used in more general contexts.
The proposed approximation algorithms provide near-optimality certificates.
The proposed graph synthesis framework can be
readily applied to any application
where maximizing tree-connectivity is desired.

\vspace{-15pt}
\subsubsection*{Notation.}
Throughout this paper, bold lower-case and upper-case letters are reserved for
vectors and matrices, respectively.
The standard basis for $\mathbb{R}^{n}$ is denoted by
$\{\ee_{i}^n\}_{i=1}^{n}$.
Sets are shown by upper-case letters.
$|\cdot|$ denotes the set cardinality. 
For any finite set $\mathcal{W}$, $\binom{\mathcal{W}}{k}$ is the set of all
$k$-subsets of $\mathcal{W}$. We use $[n]$ to denote the set $\{1,2,\dots,n\}$.
The eigenvalues of symmetric matrix $\MM$ are denoted by
\mbox{$\lambda_1(\MM) \leq \dots \leq \lambda_n(\MM)$}. $\mathbf{1}$, $\II$ and
$\mathbf{0}$ denote the vector of all ones, the identity and the zero matrices with appropriate sizes,
respectively.  $\ES_1\succ\ES_2$ (resp. $\ES_1 \succeq \ES_2$) means $\ES_1 -
\ES_2$ is positive definite (resp. positive semidefinite).
Finally, $\diag(\WW_1,\dots,\WW_k)$ is
the block-diagonal matrix whose main diagonal blocks are $\WW_1,\dots,\WW_k$.

\vspace{-0.1cm}
\section{Preliminaries}
\label{sec:pre}
\subsubsection*{Graph Matrices.} Throughout this paper, we usually refer to undirected graphs
$\GG = (\VV,\EE)$ with $n$ vertices (labeled with $[n]$) and $m$ edges. With a
little abuse of notation, we call $\widetilde{\AAA} \in \{-1,0,1\}^{n \times m}$
the incidence matrix
of $\GG$ after choosing an arbitrary orientation for its edges. The Laplacian
matrix of $\GG$ is defined as $\widetilde{\LL} \triangleq
\widetilde{\AAA}\widetilde{\AAA}^\top$.
$\widetilde{\LL}$ can be written as $\sum_{i=1}^{m} \widetilde{\LL}_{e_i}$ in
which $\widetilde{\LL}_{e_i}$ is the \emph{elementary Laplacian} associated with
edge $e_i = \{u_i,v_i\}$, where the $(u_i,u_i)$ and $(v_i,v_i)$ entries are $1$, and
the $(u_i,v_i)$ and $(v_i,u_i)$ entries are $-1$.
{Anchoring} 
$v_0 \in \VV$ is
equivalent to removing the row associated with $v_0$ from $\widetilde{\AAA}$.
Anchoring $v_0$ results in the \emph{reduced}  incidence matrix
$\AAA$ and the \emph{reduced}  Laplacian matrix $\LL \triangleq
\AAA \AAAT$. $\LL$ is also known as the \emph{Dirichlet}.
We may assign positive weights to the edges of $\GG$ via $w : \EE \to
\mathbb{R}_{>0}$. Let $\WW \in \mathbb{R}^{m \times m}$ be the diagonal
matrix whose $(i,i)$ entry is equal to the weight of the $i$th edge.
The \emph{weighted} Laplacian (resp. reduced \emph{weighted} Laplacian) is then defined as
$\widetilde{\LL}_w \triangleq \widetilde{\AAA}\WW\widetilde{\AAA}^\top$ (resp.
$\LL_w \triangleq \AAA\WW\AAA^{\hspace{-0.1cm}\top}$). Note that the (reduced) unweighted
Laplacian is a special case of the (reduced) weighted Laplacian with $\WW =
\II_m$ (i.e., when all edges have unit weight).
\vspace{-0.3cm}
\subsubsection*{Spanning Trees.}
A spanning tree of ${\GG}$ is a spanning subgraph of ${\GG}$ that is also a
tree. Let $\mathbb{T}_{\GG}$ denote the
set of all spanning trees of $\GG$. $t(\GG)\triangleq
|\mathbb{T}_{\GG}|$ denotes the number of spanning trees in $\GG$. As a
generalization, for graphs
whose edges are weighted by $w : \EE \to \mathbb{R}_{>0}$, we define
the \emph{weighted number of spanning trees}, 
\begin{equation}
  t_w(\GG) \triangleq \sum_{\TT \in \mathbb{T}_{\GG}}
  \mathbb{V}_{\hspace{-0.05cm}w}(\TT).
\end{equation}
We call $\mathbb{V}_{\hspace{-0.05cm}w} : \mathbb{T}_\GG \to \mathbb{R}_{>0}$
the \emph{tree value function} and define it as the product of the edge
weights along a spanning tree. Notice that for unit edge weights, $t_w(\GG)$
coincides with $t(\GG)$. Thus, unless explicitly stated otherwise, we generally
assume the graph is weighted. To prevent overflow and underflow, it is more
convenient to work with $\log t_w(\GG)$. We formally define
\emph{tree-connectivity} as,
\begin{equation}
  \tau_{w}(\GG) \triangleq 
  \begin{cases}
	\log t_w(\GG) & \text{if $\GG$ is connected,} \\
	0           & \text{otherwise.}
  \end{cases}
  \label{}
\end{equation}
For the purpose of this work, without loss
of generality we can assume $w(e) \geq 1$ for all $e \in
\EE$, and thus $\tau_w(\GG) \geq 0$.\footnote{Replacing any $w : \EE \to
  \mathbb{R}_{\geq 0}$ with $w^\prime : \EE \to
  \mathbb{R}_{\geq 1} : e \mapsto
\alpha_w w(e)$ for a sufficiently large constant $\alpha_w$ does not affect
the set of $t$-optimal graphs.} 
The equality occurs only when either
$\GG$ is not connected, or when $\GG$ is a tree whose all edges have unit
weight.
Kirchhoff's seminal matrix-tree theorem is a classic result in spectral graph theory.
This theorem relates the spectrum of the Laplacian matrix of graph to its number of spanning
trees. The original matrix-tree theorem states that,
\begin{align}
  t(\GG) & = \det \LL  \\ & =
  \frac{1}{n} 
  \prod_{i=2}^{n} \lambda_i(\widetilde{\LL}).
\end{align}
Here $\LL$ is the reduced Laplacian after anchoring an arbitrary
vertex.
Kirchhoff's matrix-tree theorem has been generalized to the case of
edge-weighted graphs. According to the generalized theorem,
$t_w(\GG)  = \det \LL_w =
\frac{1}{n}
\prod_{i=2}^{n} \lambda_i(\widetilde{\LL}_w)$.
\subsubsection*{Submodularity.}
  Suppose $\mathcal{W}$ is a finite set.
  Consider a set function $\xi : 2^{\mathcal{W}} \to \mathbb{R}$. $\xi$ is
  called:
  \begin{enumerate}
    \item \emph{normalized} \,iff $\xi(\varnothing) = 0$.
    \item \emph{monotone} \,\,\,\,iff $\xi(\Bcal) \geq \xi(\Acal)$ for every
      $\Acal$ and $\Bcal$ s.t. $\Acal \subseteq \Bcal \subseteq \mathcal{W}$.
    \item \emph{submodular} iff 
      for every
      $\Acal$ and $\Bcal$ s.t. $\Acal \subseteq \Bcal \subseteq \mathcal{W}$ and
      \mbox{$\forall s \in
	\mathcal{W}
      \setminus \Bcal$}
      we have,
      \begin{equation}
	\xi(\Acal \cup \{s\}) - \xi(\Acal) \geq \xi(\Bcal \cup \{s\}) -
	\xi(\Bcal).
	\label{}
      \end{equation}
  \end{enumerate}
\section{D-Optimality via Graph Synthesis}
\label{sec:robot}
In this section, we discuss the connection between D-optimality and
$t$-optimality in SLAM by briefly reviewing the results in 
\cite{Khosoussi2014,Khosoussi2015graph,kasra16icra}. 
Consider the 2-D pose-graph SLAM problem where each measurement consists of the
rotation (angle) and translation between a pair of robot poses over time, 
corrupted by an independently-drawn additive zero-mean Gaussian noise. According
to our model, the
covariance matrix of the noise
vector corrupting the $i$th measurement can be written as
$\diag(\sigma_{p_i}^{2}\II_2,\sigma_{\theta_i}^{2})$, where $\sigma_{p_i}^2$
and $\sigma_{\theta_i}^{2}$ denote the translational and
rotational noise variances, respectively. As mentioned earlier, SLAM, as an EoG
problem, admits a natural graphical representation $\GG = (\VV,\EE)$ in which
poses correspond to graph vertices and edges correspond to the relative
measurements between the corresponding poses. Furthermore, 
measurement precisions are incorporated into our model by assigning positive
weights to the edges of $\GG$. Note that for each edge we have two separate weight functions
$w_p$ and $w_\theta$, defined as $w_p : e_i \mapsto \sigma_{p_i}^{-2}$ and
$w_{\theta} : e_i \mapsto \sigma_{\theta_i}^{-2}$.

Let $\mathbb{V}\mathrm{ar}[\hat{\xx}_{\textsf{mle}}]$ be the asymptotic
covariance matrix of the maximum likelihood
estimator (Cram\'{e}r-Rao lower bound) for estimating the trajectory $\xx$.
In \cite{Khosoussi2014,Khosoussi2015graph,kasra16icra}, we investigated the
impact of graph topology on the \mbox{D-optimality} criterion ($\det
\mathbb{V}\mathrm{ar}[\hat{\xx}_{\textsf{mle}}]$) in SLAM. 
The results presented in \cite{kasra16icra} are threefold. 
First, in \cite[Proposition 2]{kasra16icra} it is proved that
\vspace{-4pt}
\begin{equation}
  -2\,\tau_{w_p}(\GG) - \log\det(\LL_{w_\theta}+\gamma\II) \leq
  \log\det\mathbb{V}\mathrm{ar}[\hat{\xx}_{\textsf{mle}}] \leq
  -2\,\tau_{w_p}(\GG) - \tau_{w_\theta}(\GG)
\vspace{-3pt}
\end{equation}
in which $\gamma$ is a parameter whose value depends on the maximum distance between
the neighbouring  robot poses normalized by $\sigma_{p_i}^2$'s; e.g., this
parameter shrinks by reducing the distance between the neighbouring poses, or by
reducing the precision of the translational measurements (see
\cite[Remark 2]{kasra16icra}). Next, based on this result, it is easy to see
that \cite[Theorem
5]{kasra16icra}, 
\vspace{-3pt}
\begin{equation}
  \lim_{\gamma \to 0^+} \log\det\mathbb{V}\mathrm{ar}[\hat{\xx}_{\textsf{mle}}]
  = - 2\,\tau_{w_p}(\GG) - \tau_{w_\theta}(\GG).
  \label{eq:SLAMth}
\vspace{-3pt}
\end{equation}
Note that the expression above depends only on the graphical
representation of the problem.
Finally, the empirical observations and Monte Carlo simulations based on a
number of synthetic and real datasets indicate that the RHS of \eqref{eq:SLAMth}
provides a reasonable estimate for
$\log\det\mathbb{V}\mathrm{ar}[\hat{\xx}_{\textsf{mle}}] $ even in the
non-asymptotic regime where $\gamma$ is not negligible. 
In what follows, we demonstrate how these results can be used in a
graph-theoretic approach to the D-optimal measurement selection and pruning
problems.
\vspace{-0.1cm}
\subsubsection*{Measurement Selection.} 
Maintaining sparsity is essential for computational efficiency in
SLAM, especially in long-term autonomy.
Sparsity can be preserved by
implementing a measurement selection policy to asses the significance of new or
existing measurements. Such a vetting process can be realized by (i)
assessing the significance of any new measurement before adding it to the graph,
and/or (ii) pruning a subset of the acquired measurements if their
contribution is deemed to be insufficient. These ideas have been
investigated in the literature; for the former approach see, e.g.,
\cite{Joshi2009,shamaiah2010greedy}, and see, e.g.,
\cite{6094414,huang2013consistent} for the latter.

Now consider the D-optimal measurement selection problem whose goal is to select
the optimal $k$-subset of measurements such that the resulting
$\log\det\mathbb{V}\mathrm{ar}[\hat{\xx}_{\textsf{mle}}]$ is minimized.  This
problem is closely related to the D-optimal sensor selection problem for which
 two successful approximation algorithms have been proposed in \cite{Joshi2009}
and \cite{shamaiah2010greedy} under the assumption of linear sensor models.
The measurement models in SLAM are nonlinear. Nevertheless, we can still use
\cite{Joshi2009,shamaiah2010greedy} after linearizing the measurement model.
Note that the Fisher
information matrix and $\log\det\mathbb{V}\mathrm{ar}[\hat{\xx}_{\textsf{mle}}]$ in SLAM depend
on the true $\xx$. Since the true value of $\xx$ is not available, in practice
these terms are approximated by evaluating the Jacobian matrix at the estimate
obtained by maximizing the log-likelihood function using an iterative solver.

An alternative approach would be to replace
$\log\det\mathbb{V}\mathrm{ar}[\hat{\xx}_{\textsf{mle}}]$ with a
graph-theoretic objective function based on  \eqref{eq:SLAMth}. 
Note that this is equivalent to reducing the original problem into a graph synthesis problem.  
The graphical approach has the following
advantages:
\begin{enumerate}
  \item \emph{Robustness}: Maximum likelihood estimation in SLAM boils down to
	solving a non-convex
	optimization problem via iterative solvers. These solvers are subject to local
	minima. 
	Hence, the approximated
	$\log\det\mathbb{V}\mathrm{ar}[\hat{\xx}_{\textsf{mle}}]$ can be highly inaccurate
	and lead to misleading designs if the Jacobian is evaluated at a local
	minimum (see \cite[Section
	VI]{kasra16icra} for an example). The graph-theoretic objective function
	based on \eqref{eq:SLAMth}, however, is independent of the trajectory $\xx$
	and, therefore, is robust to such convergence errors.
	\vspace{0.1cm}
  \item \emph{Flexibility}: To directly compute
	$\log\det\mathbb{V}\mathrm{ar}[\hat{\xx}_{\textsf{mle}}]$, we first need a
	nominal or estimated trajectory $\xx$. Furthermore, for the latter we also
	need to know the realization of relative measurements. Therefore, any design
	or decisions made in this way will be confined to a particular trajectory.
	On the contrary, the graphical approach requires only the knowledge of the
	topology of the graph, and thus is more flexible. Note that the $t$-optimal
	topology corresponds to a range of trajectories. Therefore, the graphical
	approach enables us to assess the D-optimality of a particular design with
	minimum information and without relying on any particular---planned, nominal or
	estimated---trajectory.
\end{enumerate}
We will
investigate the problem of designing $t$-optimal graphs in
Section~\ref{sec:Synthesis}.

\vspace{-0.15cm}
\section{Synthesis of Near-$t$-Optimal Graphs}
\label{sec:Synthesis}
\subsubsection*{Problem Formulation.}
In this section, we formulate and tackle the combinatorial optimization problem
of designing sparse graphs with the maximum weighted tree-connectivity.
Since the decision variables are the edges of the graph, it is more convenient
to treat the weighted tree-connectivity as a function of the edge set of the
graph for a given set of vertices ($\VV = [n]$) and a positive weight function
$w : \binom{[n]}{2} \to \Rset_{\geq 1}$. $\logtree_{n,w}:2^{\binom{[n]}2} \to
\Rset_{\geq 0} : \EE \mapsto \tau_w([n],\EE)$ takes
as input a set of edges $\EE$ and returns the weighted tree-connectivity of
graph $([n],\EE)$.
To simplify our notation, hereafter we drop $n$ and/or $w$ from
$\logtree_{n,w}$ (and similar terms) whenever $n$ and/or $w$ are clear from the context.
\begin{tcolorbox}[colback=blue!0!white,colframe=black!90]
  \begin{problem}[$k$-ESP]
	Suppose the following are given:
	\begin{itemize}
	  \item[$\bullet$] a \emph{base graph} $\GGi = ([n],\EEi)$
	  \item[$\bullet$] a weight function $w : \binom{[n]}{2} \to
		\Rset_{\geq 1}$
	  \item[$\bullet$] a set of $c$ \emph{candidate} edges (either $\Cplus$ or
		$\Cminus$)
	  \item[$\bullet$] an integer $k \leq c$ 
	\end{itemize}
	Consider the following edge selection problems (ESP):
	\begin{itemize}
	  \item[$\diamond$] $k$-$\espPlus{}$
		\begin{equation}
		  \begin{aligned}
			& \underset{\EE \subseteq \Cplus \subseteq \binom{[n]}{2} \setminus \EEi}{\text{maximize}}
			& & \logtree(\EEi \cup \EE) \quad \text{\normalsize subject to} \quad |\EE| = k.
		  \end{aligned}
		  \label{eq:addEdge}
		\end{equation}
	  \item[$\diamond$] $k$-$\espMinus{}$
		\begin{equation}
		  \begin{aligned}
			& \underset{\EE \subseteq \Cminus \subseteq  \EEi}{\text{maximize}}
			& & \logtree(\EEi \setminus \EE) \quad \text{\normalsize subject to}
			\quad |\EE| = k.
		  \end{aligned}
		  \label{eq:delEdge}
		\end{equation}
	\end{itemize}
  \end{problem}
\end{tcolorbox}
\begin{remark}
  It is easy to see that any instance of \eqref{eq:addEdge} can be expressed as an instance of
  \eqref{eq:delEdge} and vice versa. 
  Therefore, without loss
  of generality, in this work we only consider $k$-$\espPlus$.
\end{remark}
\subsubsection*{$1$-$\espPlus$.}Consider the simple case of $k = 1$. $\Delta_{uv} \triangleq \aaa_{uv}
\LL^{-1} \aaa_{uv}$ is known as the \emph{effective
resistance} between vertices $u$ and $v$. Here $\aaa_{uv} \in
\{-1,0,1\}^{n-1}$ is the vector
$\ee_u^n - \ee_v^n$ after crossing out the entry that corresponds to the anchored
vertex.
Effective resistance has emerged
from several other contexts as a key factor; see, e.g., \cite{ghosh2008minimizing}.
In \cite[Lemma 3.1]{kasraArxiv16} it is shown that the optimal choice in
$1$-$\espPlus$ is the candidate edge with the maximum $w(e)\Delta_{e}$. The
effective resistance can be efficiently computed by performing a Cholesky
decomposition on the reduced weighted Laplacian matrix of the base graph $\LLi$ and
solving a triangular linear system (see \cite{kasraArxiv16}). In the worst case
and for a dense base graph $1$-$\espPlus$ can be solved in $O(n^3 + c \,n^2)$ time. 
\subsection{Approximation Algorithms for $k$-$\espPlus$}
Solving the general case of $k$-$\espPlus$ by exhaustive search requires
examining $\binom{c}{k}$ graphs. This is not practical even when $c$ is bounded
(e.g., for $c=30$ and $k=10$ we need to perform more than $3 \times 10^7$ Cholesky
factorizations). Here we propose a complementary pair of approximation algorithms.
\subsubsection{I: Greedy.}
The greedy algorithm finds an approximate solution to $k$-$\espPlus$ by
decomposing it into a sequence of $k$ $1$-$\espPlus$ problems, each of which can be solved
using the procedure outlined above.
After solving each
subproblem, the optimal edge is moved from the candidate set to the base graph. The next $1$-$\espPlus$ subproblem is defined using the updated
candidate set and the updated base graph. If the graph is dense, a naive implementation of the greedy
algorithm requires less than $O(k c n^3)$ operations. An efficient
implementation of this approach that requires $O(n^3 + kcn^2)$ time is described in \cite[Algorithm 1]{kasraArxiv16}.
\vspace{-15pt}
\subsubsection*{Analysis.} 
Let $\GGi = ([n],\EEi)$ be a \emph{connected} base graph and $w : \binom{[n]}{2} \to
\Rset_{\geq 1}$. Consider the following function.
\begin{align}
  \mathcal{X}_{w}: \EE  \mapsto \logtree({\EE \cup \EEi})  - \logtree({\EEi}).
  \label{}
\end{align}
In $k$-$\espPlus$, we restrict the domain of $\mathcal{X}_{w}$ to $2^{\Cplus}$.
Note that $\logtree({\EEi})$ is a constant and, therefore, we can express the
objective function in $k$-$\espPlus$ using $\mathcal{X}_w$,
\begin{equation}
  \begin{aligned}
	& \underset{\EE \subseteq \Cplus}{\text{maximize}}
	& & \mathcal{X}_{w}(\EE) \quad \text{\normalsize subject to} \quad |\EE| = k.
  \end{aligned}
  \label{eq:kPlus}
\end{equation}
\vspace{-15pt}
  \begin{theorem}
	$\mathcal{X}_{w}$ is normalized, monotone and submodular.
	\label{th:logTG-sub}
  \end{theorem}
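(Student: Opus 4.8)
The plan is to verify the three properties in turn, reducing everything to Loewner-order facts about the reduced weighted Laplacian via the generalized matrix-tree theorem $t_w(\GG) = \det \LL_w$. The key preliminary observation is that, because the base graph $\GGi$ is connected, every graph $([n], \EE \cup \EEi)$ with $\EE \subseteq \Cplus$ is also connected; hence $\logtree(\EE \cup \EEi) = \log \det \LL_w$ with $\LL_w \succ 0$ throughout, and I may work entirely with log-determinants of strictly positive-definite matrices. Normalization is then immediate, since $\mathcal{X}_{w}(\varnothing) = \logtree(\EEi) - \logtree(\EEi) = 0$.

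For monotonicity, I would use that adding edges only appends elementary weighted Laplacians $w(e)\,\aaa_e\aaa_e^\top \succeq 0$ to the reduced Laplacian. Thus for $\Acal \subseteq \Bcal \subseteq \Cplus$ one has $\LL_w(\Bcal \cup \EEi) \succeq \LL_w(\Acal \cup \EEi) \succ 0$ in the Loewner order, and since $\log\det$ is monotone on the positive-definite cone we get $\logtree(\Bcal \cup \EEi) \geq \logtree(\Acal \cup \EEi)$; subtracting the constant $\logtree(\EEi)$ gives $\mathcal{X}_{w}(\Bcal) \geq \mathcal{X}_{w}(\Acal)$.

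Submodularity is the crux, and the idea is to compute the marginal gain of a single edge in closed form. Fix a candidate edge $s = \{u,v\}$ with weight $w(s)$ and reduced incidence vector $\aaa_s = \ee_u^n - \ee_v^n$, and let $\LL_w$ be the reduced weighted Laplacian of $([n], \EE \cup \EEi)$ for any $\EE \subseteq \Cplus$ with $s \notin \EE$. Adding $s$ appends the rank-one term $w(s)\,\aaa_s\aaa_s^\top$, so the matrix determinant lemma yields
\[
  \det\big(\LL_w + w(s)\,\aaa_s\aaa_s^\top\big) \;=\; \det(\LL_w)\,\big(1 + w(s)\,\aaa_s^\top \LL_w^{-1}\aaa_s\big),
\]
whence the marginal gain is $\log\big(1 + w(s)\,\Delta_s(\EE)\big)$, where $\Delta_s(\EE) \triangleq \aaa_s^\top \LL_w^{-1}\aaa_s$ is the effective resistance of $s$ in $([n], \EE \cup \EEi)$ --- exactly the quantity driving $1$-$\espPlus$. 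Substituting this into the diminishing-returns inequality for $\Acal \subseteq \Bcal$ and $s \notin \Bcal$, the constant $\logtree(\EEi)$ cancels and, since $\log$ is increasing, the entire inequality collapses to
\[
  \Delta_s(\Acal) \;\geq\; \Delta_s(\Bcal).
\]
This is Rayleigh's monotonicity law, which I would derive directly: $\Acal \subseteq \Bcal$ gives $\LL_w(\Acal \cup \EEi) \preceq \LL_w(\Bcal \cup \EEi)$ on the positive-definite cone, inversion is anti-monotone there, so $\LL_w(\Acal \cup \EEi)^{-1} \succeq \LL_w(\Bcal \cup \EEi)^{-1}$, and evaluating this operator inequality on the fixed vector $\aaa_s$ yields the claim.

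I expect the main obstacle to be organizational rather than deep: the substantive content is fully carried by (i) the matrix determinant lemma, which linearizes each log-determinant increment into $\log(1 + w(s)\Delta_s)$, and (ii) the anti-monotonicity of effective resistance under edge addition. The one point requiring care is maintaining strict positive-definiteness everywhere so that $\LL_w^{-1}$ exists and the Loewner-order inversion step is legitimate; this is precisely what the connectedness of $\GGi$ guarantees, as it forces every reduced Laplacian appearing in the argument to be invertible.
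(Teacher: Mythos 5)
Your proof is correct and follows essentially the same route as the paper's (deferred to its technical report): reduce $\logtree$ to $\log\det\LL_w$ via the generalized matrix-tree theorem, obtain the marginal gain $\log\bigl(1 + w(s)\,\Delta_s\bigr)$ from the matrix determinant lemma---the same identity underlying the paper's $1$-$\espPlus$ discussion---and conclude submodularity from Rayleigh's monotonicity of effective resistance. Your only departure is cosmetic: you derive Rayleigh monotonicity directly from Loewner anti-monotonicity of matrix inversion rather than citing it, which makes the argument self-contained.
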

\begin{proof}
  Omitted due to space limitation---see the technical report
  \cite{kasraArxiv16}.
\end{proof}
\vspace{-5pt}
Maximizing an arbitrary monotone submodular function subject to a cardinality
constraint {can be} NP-hard in general (see, e.g., the Maximum Coverage
problem \cite{hochbaum1996approximation}). 
Nemhauser et al. \cite{nemhauser1978analysis} in their seminal work have shown that the greedy
algorithm is a constant-factor approximation algorithm with a factor of
$\eta \triangleq (1-1/e) \approx 0.63$ for any
(normalized) monotone submodular function subject to a cardinality
constraint. 
Let $\OPT$ be the optimum value of \eqref{eq:addEdge},
$\EE_\text{greedy}$ be the edges selected by the greedy algorithm, $\tau_{\text{greedy}} \triangleq
\logtree(\EE_{\text{greedy}}\cup\EEi)$ and $\tau_{\text{init}} \triangleq
\logtree(\EEi)$.
  \begin{corollary}
	$\normalfont
	\tau_{\text{greedy}} \geq \eta\,\OPT + (1-\eta)\,\tau_{\text{init}}$.
  \end{corollary}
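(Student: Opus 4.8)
The plan is to recognize this corollary as an affine restatement of the classical greedy guarantee of Nemhauser et al., applied to the function $\mathcal{X}_{w}$ whose structure was just established in Theorem~\ref{th:logTG-sub}. First I would observe that the greedy procedure described above for $k$-$\espPlus$ is precisely the standard greedy algorithm for maximizing $\mathcal{X}_{w}$ over $2^{\Cplus}$ under the cardinality constraint $|\EE| = k$: at each iteration the algorithm solves a $1$-$\espPlus$ subproblem, i.e.\ it appends the candidate edge that maximizes $\logtree(\EEi \cup \cdot)$ given the edges chosen so far, and since $\logtree(\EEi)$ and all previously selected terms are fixed, this is exactly the edge of maximal marginal gain $\mathcal{X}_{w}(\Acal \cup \{e\}) - \mathcal{X}_{w}(\Acal)$.

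Then I would invoke the result of Nemhauser et al.: because $\mathcal{X}_{w}$ is normalized, monotone and submodular, the greedy output $\EE_{\text{greedy}}$ satisfies
\begin{equation}
  \mathcal{X}_{w}(\EE_{\text{greedy}}) \;\geq\; \eta \cdot \max_{\EE \subseteq \Cplus,\, |\EE| = k} \mathcal{X}_{w}(\EE).
\end{equation}
Next I would translate both sides back to tree-connectivity. By definition $\mathcal{X}_{w}(\EE) = \logtree(\EE \cup \EEi) - \logtree(\EEi)$, and since $\tau_{\text{init}} = \logtree(\EEi)$ is a constant, the maximum on the right is exactly $\OPT - \tau_{\text{init}}$, while the left side equals $\tau_{\text{greedy}} - \tau_{\text{init}}$. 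Substituting gives $\tau_{\text{greedy}} - \tau_{\text{init}} \geq \eta\,(\OPT - \tau_{\text{init}})$, and rearranging yields $\tau_{\text{greedy}} \geq \eta\,\OPT + (1-\eta)\,\tau_{\text{init}}$, as claimed.

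There is no genuine obstacle here: the entire content sits in Theorem~\ref{th:logTG-sub}, and the remaining work is the bookkeeping of the additive constant $\tau_{\text{init}}$. The only point deserving a line of care is the first step---confirming that the greedy as implemented (repeatedly solving $1$-$\espPlus$) coincides with marginal-gain greedy for $\mathcal{X}_{w}$---so that the guarantee of Nemhauser et al.\ applies verbatim. Once that is noted, the factor $(1-\eta)$ emerges purely as an artifact of shifting the baseline from $0$ (the normalized value $\mathcal{X}_{w}(\varnothing)$) up to $\tau_{\text{init}}$, and the corollary follows with no further estimation.
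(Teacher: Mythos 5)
Your proposal is correct and matches the paper's intended argument exactly: the corollary is stated as an immediate consequence of Theorem~\ref{th:logTG-sub} together with the Nemhauser et al.\ greedy guarantee, with the $(1-\eta)\,\tau_{\text{init}}$ term arising precisely from the baseline shift $\mathcal{X}_{w}(\EE) = \logtree(\EE \cup \EEi) - \tau_{\text{init}}$ that you carry out. Your added care in verifying that iterated $1$-$\espPlus$ coincides with marginal-gain greedy is exactly the right (and only) point needing a check.
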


\vspace{-15pt}
\subsubsection{II: Convex Relaxation.}
In this section, we design an approximation algorithm
for $k$-$\espPlus$ through convex relaxation. We begin by
assigning an auxiliary variable $0 \leq \pi_i \leq 1$ to each candidate edge
$e_i \in\Cplus$.
The idea is to reformulate the problem such that finding the optimal set of
candidate edges is equivalent to finding the optimal
value for $\pi_i$'s. 
Let $\ppp \triangleq [\pi_1 \,\, \pi_2 \,\, \cdots \,\, \pi_c]^\top$ be the
stacked vector of auxiliary variables. 
Define,
\begin{equation}
    \LL_w(\ppp) \triangleq \LLi + 
      \sum_{\mathclap{e_i \in \Cplus}} \pi_{i} w(e_i) \LL_{e_i} = \AAA
      \WW^\pi\hspace{-0.09cm} \AAAT,
  \label{eq:Lpi}
\end{equation}
where $\LL_{e_i}$ is the reduced elementary 
Laplacian, $\AAA$ is the reduced incidence matrix of $\GG_{\bullet} \triangleq
([n],\EEi \cup \Cplus)$, and $\WW^\pi$ is the diagonal matrix of edge weights
assigned by the following weight function,
\begin{equation}
  w^\pi(e_i) = 
  \begin{cases}
    \pi_i w(e_i) & e_i \in \Cplus, \\
    w(e_i) & e_i \notin \Cplus.
  \end{cases}
  \label{}
\end{equation}
\vspace{-10pt}
\begin{lemma}
  If
  $\GGi$ is connected, $\LL_w(\ppp)$ is positive definite for any $\ppp \in
  [0,1]^{c}$.
\end{lemma}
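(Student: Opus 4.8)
The plan is to read off from \eqref{eq:Lpi} that $\LL_w(\ppp) = \LLi + \sum_{e_i \in \Cplus} \pi_i\, w(e_i)\, \LL_{e_i}$ is the sum of the reduced weighted Laplacian $\LLi$ of the (connected) base graph and a nonnegative combination of reduced elementary Laplacians, and then to invoke the fact that adding positive semidefinite matrices to a positive definite matrix preserves positive definiteness. It therefore suffices to establish two claims: (i) $\LLi \succ 0$, and (ii) each summand $\pi_i w(e_i)\LL_{e_i}$ is positive semidefinite.

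For claim (i), I would first note that $\LLi = \AAA \WW \AAAT$ is a Gram-type matrix, so it is automatically positive semidefinite: $\xx^\top \LLi \xx = \| \WW^{1/2} \AAAT \xx \|^2 \geq 0$ for every $\xx$. It remains to show the kernel is trivial. Since $\WW$ has strictly positive diagonal, $\xx^\top\LLi\xx = 0$ forces $\AAAT \xx = \zero$. Lifting $\xx \in \Rset^{n-1}$ to $\hat{\xx} \in \Rset^{n}$ by inserting a zero at the anchored coordinate gives $\widetilde{\AAA}^\top \hat{\xx} = \zero$. Because $\GGi$ is connected, the kernel of $\widetilde{\AAA}^\top$ equals $\mathrm{span}(\ones)$; but $\hat{\xx}$ vanishes at the anchored vertex, so $\hat{\xx} = \zero$ and hence $\xx = \zero$. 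Thus $\LLi$ has trivial kernel and is positive definite. (Alternatively, Kirchhoff's matrix-tree theorem gives $\det \LLi = t_w(\GGi) > 0$ for a connected graph, which combined with $\LLi \succeq 0$ yields $\LLi \succ 0$ directly.)

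For claim (ii), each reduced elementary Laplacian is the rank-one outer product $\LL_{e_i} = \aaa_{e_i}\aaa_{e_i}^\top \succeq 0$ of the corresponding reduced incidence column. Since $\ppp \in [0,1]^c$ gives $\pi_i \geq 0$ and the weights satisfy $w(e_i) \geq 1 > 0$, each coefficient $\pi_i w(e_i)$ is nonnegative, so every summand is positive semidefinite. Adding these to $\LLi \succ 0$ preserves strict definiteness: for any $\xx \neq \zero$, $\xx^\top \LL_w(\ppp)\xx = \xx^\top \LLi \xx + \sum_i \pi_i w(e_i)\, \xx^\top \LL_{e_i}\xx > 0$, because the first term is strictly positive and the remaining terms are nonnegative.

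The only delicate point—and hence the main obstacle—is establishing \emph{strict} definiteness of $\LLi$ rather than mere semidefiniteness; everything else follows immediately from closure of the positive semidefinite cone under nonnegative combinations. This is exactly where connectivity of $\GGi$ enters: it is precisely the hypothesis that collapses the kernel of the reduced incidence matrix to $\{\zero\}$ (equivalently, makes the tree count nonzero). I would also remark that the argument never uses the upper bound $\pi_i \leq 1$; nonnegativity of the entries alone suffices, so the conclusion in fact holds on the entire nonnegative orthant.
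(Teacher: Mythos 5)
Your proof is correct, and it follows the natural argument the paper itself relies on (the paper states this lemma without proof): decompose $\LL_w(\ppp)$ as the reduced weighted Laplacian $\LLi$ of the connected base graph, which is positive definite, plus a nonnegative combination of rank-one positive semidefinite terms $\pi_i w(e_i)\aaa_{e_i}\aaa_{e_i}^\top$. Your handling of the key step---connectivity forcing the kernel of the reduced incidence matrix to be trivial---and your observation that only $\pi_i \geq 0$ (not $\pi_i \leq 1$) is needed are both accurate.
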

As before, for convenience we assume $\GGi$ is connected.
Consider the following optimization problems over $\ppp$.

\noindent\begin{minipage}{.5\linewidth}
\begin{equation}
  \begin{aligned}
    & \underset{\ppp}{\text{maximize}}
    & & \log\det {\LL_w(\ppp)}\\
    & \text{subject to}
    && \|\ppp\|_0 = k,\\
    &&& 0\leq \pi_i \leq {1}, \, \forall i \in [c].
  \end{aligned}
  \label{eq:conv0}
  \tag{P$_1$}
\end{equation}
\end{minipage}
\quad\hfill\quad
\begin{minipage}{.5\linewidth}
\begin{equation}
  \begin{aligned}
    & \underset{\ppp}{\text{maximize}}
    & & \log\det {\LL_w(\ppp)}\\
    & \text{subject to}
    && \|\ppp\|_1 = k,\\
    &&& \pi_i \in \{0,1\}, \, \forall i \in [c].
  \end{aligned}
  \label{eq:conv0bin}
  \tag{P$^\prime_1$}
\end{equation}
\end{minipage}
\vspace{0.2cm}

\noindent\ref{eq:conv0} is equivalent to our original definition of
$k$-$\espPlus$. 
First, note that from the generalized matrix-tree theorem we know that the objective function is
equal to the weighted tree-connectivity of graph $\GG_\bullet = ([n],\EEi \cup \Cplus)$ whose
edges are weighted by $w^\pi$.
The auxiliary variables act as selectors: the $i$th candidate edge is
selected iff $\pi_i = 1$.
 The
combinatorial difficulty of $k$-$\espPlus$ here is embodied in the
non-convex $\ell_0$-norm constraint. 
It is easy to see that in \ref{eq:conv0}, at the optimal solution, auxiliary variables
take binary values.
This is why the integer program \ref{eq:conv0bin} is equivalent to \ref{eq:conv0}.
A natural choice for relaxing \ref{eq:conv0bin} is to replace $\pi_i \in \{0,1\}$
with $0 \leq \pi_i \leq 1$; i.e.,
\begin{equation}
  \begin{aligned}
    & \underset{\ppp}{\text{maximize}}
    & & \log\det {\LL_w(\ppp)}\\
    & \text{subject to}
    && \|\ppp\|_1 = k,\\
    &&& 0\leq \pi_i \leq {1}, \, \forall i \in [c].
  \end{aligned}
  \label{eq:conv1}
  \tag{P$_2$}
\end{equation}
The feasible set of \ref{eq:conv1} contains that of \ref{eq:conv0bin}. Hence, the optimum value of \ref{eq:conv1} is an upper bound for the optimum
of \ref{eq:conv0} (or, equivalently, \ref{eq:conv0bin}).  Note that the
$\ell_1$-norm constraint here is identical to $\sum_{i=1}^{c} \pi_i = k$.
\ref{eq:conv1} is a convex optimization problem since the objective function
(tree-connectivity) is concave and the constraints are linear and affine in $\ppp$.
In fact, \ref{eq:conv1} is an instance of the $\mathrm{MAXDET}$ problem
\cite{vandenberghe1998determinant} subject to additional affine 
constraints on $\ppp$.
It is worth noting that \ref{eq:conv1} can be reached also by relaxing the non-convex $\ell_0$-norm
constraint in \ref{eq:conv0} into the convex $\ell_1$-norm constraint
$\|\ppp\|_1
= k$.
Furthermore, \ref{eq:conv1} is also closely related to a $\ell_1$-regularised
variant of $\mathrm{MAXDET}$,
\begin{equation}
  \begin{aligned}
    & \underset{\ppp}{\text{maximize}}
    & & \log\det {\LL_w(\ppp)} - \lambda \, \|\ppp\|_1\\
    & \text{subject to}
    && 0\leq \pi_i \leq {1}, \, \forall i \in [c].
  \end{aligned}
  \label{eq:conv1b}
  \tag{P$_3$}
\end{equation}
This problem is a penalized form of \ref{eq:conv1}; these two problems
are equivalent for some positive value of $\lambda$.
Problem~\ref{eq:conv1b} is also a convex optimization problem for any non-negative
$\lambda$.
The $\ell_1$-norm in \ref{eq:conv1b} penalizes the loss of sparsity, while the
log-determinant rewards stronger tree-connectivity.  
$\lambda$ is a parameter that controls the sparsity of the resulting graph; i.e.,
a larger $\lambda$ yields a sparser vector of selectors $\ppp$. \ref{eq:conv1b}
is closely related to graphical lasso
\cite{friedman2008sparse}.
\ref{eq:conv1} (and \ref{eq:conv1b}) can be solved globally in polynomial time using
interior-point methods \cite{Boyd2004,Joshi2009}.  After finding a globally optimal
solution $\ppp^\star$ for the relaxed problem \ref{eq:conv1}, we ultimately need
to map it into a feasible $\ppp$ for \ref{eq:conv0bin}; i.e., choosing $k$ edges
from the candidate set $\Cplus$. 
\vspace{-5pt}
  \begin{lemma}
	$\ppp^\star$ is an optimal solution for
	$k$-$\normalfont \espPlus$ iff 
	 $\ppp^\star \in \{0,1\}^c$.
  \end{lemma}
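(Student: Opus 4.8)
The plan is to prove the two implications separately, leaning on the upper-bound relationship between the relaxation \ref{eq:conv1} and the integer program \ref{eq:conv0bin} that was already established above, namely that the feasible set of \ref{eq:conv1} contains that of \ref{eq:conv0bin}, so the optimum of \ref{eq:conv1} is an upper bound for the optimum of \ref{eq:conv0bin} (equivalently \ref{eq:conv0}). Throughout, $\ppp^\star$ denotes a globally optimal solution of \ref{eq:conv1}, and I would freely use the equivalence between \ref{eq:conv0}, \ref{eq:conv0bin}, and the original combinatorial $k$-$\espPlus$ noted earlier.

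For the forward direction, I would simply invoke feasibility: if $\ppp^\star$ is optimal for $k$-$\espPlus$, then it is in particular feasible for \ref{eq:conv0bin}, whose constraints force $\pi_i \in \{0,1\}$ for every $i \in [c]$; hence $\ppp^\star \in \{0,1\}^c$. This direction is essentially definitional, requiring only that optimality entails feasibility for the integer program.

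For the reverse direction, suppose $\ppp^\star \in \{0,1\}^c$. Because $\ppp^\star$ is optimal for \ref{eq:conv1}, it satisfies $\|\ppp^\star\|_1 = k$; combined with integrality, this says $\ppp^\star$ has exactly $k$ unit entries, so it is also feasible for \ref{eq:conv0bin}. I would then close the argument with a sandwich: on one side, since $\ppp^\star$ is feasible for \ref{eq:conv0bin}, its objective value is at most the optimum of \ref{eq:conv0bin}; on the other side, $\ppp^\star$ attains the optimum of \ref{eq:conv1}, which upper-bounds the optimum of \ref{eq:conv0bin}. These two inequalities force equality, so $\ppp^\star$ attains the optimum of \ref{eq:conv0bin} while being feasible for it, i.e., $\ppp^\star$ is optimal for $k$-$\espPlus$. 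No step here is genuinely hard; the only point requiring care is keeping the three formulations straight and correctly reading the $\ell_1$-constraint under integrality as ``exactly $k$ selected edges,'' which is precisely what makes an integral $\ppp^\star$ simultaneously feasible for both the relaxation and the integer program. The whole lemma is then the standard relaxation-tightness certificate: an integral optimizer of a relaxation whose feasible region contains the integer feasible region must also solve the integer problem.
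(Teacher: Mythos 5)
Your proof is correct and takes essentially the route the paper intends: the lemma is stated without an explicit proof, but the surrounding text sets up exactly the two facts you rely on---the feasible set of \ref{eq:conv1} contains that of \ref{eq:conv0bin}, so the relaxed optimum upper-bounds the integer optimum---and your sandwich argument for the reverse direction, together with the feasibility observation for the forward direction, is the standard completion of that setup. No gap to report.
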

\vspace{-18pt}
\subsubsection*{Rounding.} In general, $\ppp^\star$ may contain
fractional values that need to be mapped into feasible integral values for
\ref{eq:conv0bin} by a \emph{rounding procedure} that sets $k$ auxiliary
variables to one and others to zero. The most intuitive deterministic rounding
policy is to pick
the $k$ edges with the largest $\pi^\star_i$'s. 

The idea behind the convex relaxation technique described so far can be seen as a
graph-theoretic special case
of the algorithm proposed in \cite{Joshi2009}. However, it is not
clear yet how the solution of the relaxed convex problem
\ref{eq:conv1} is related to that of the original non-convex $k$-$\espPlus$ in the
integer program \ref{eq:conv0bin}.
To answer this question, consider the following randomized strategy. We may attempt to
find a suboptimal solution for $k$-$\espPlus$ by randomly sampling
candidates. In this case, for the $i$th candidate edge, we flip a coin whose
probability of heads is $\pi_i$ (independent of other candidates). We then
select that candidate edge if the coin lands on head.  
\begin{theorem} Let the
  random variables $k^{\ast}$ and $t_w^\ast$ denote, respectively, the number of
  chosen candidate edges and the corresponding weighted number of spanning trees
  achieved by the above randomized algorithm. Then,
	\begin{align}
	  \mathbb{E}\,[k^\ast] & = \sum_{i=1}^{c} \pi_i, \\
	  \mathbb{E}\,[t_w^\ast] & = \det\LL_w(\ppp).
	\end{align}
	\label{th:random}
  \end{theorem}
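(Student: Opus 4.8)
The plan is to treat the two identities separately, since the first follows at once from linearity of expectation while the second rests on the generalized matrix-tree theorem together with the product structure of both the coin flips and the tree value function. Throughout I would let $S \subseteq \Cplus$ denote the (random) set of candidate edges selected by the algorithm, and I would write $\GG^\ast \triangleq ([n], \EEi \cup S)$ for the resulting random graph, so that $k^\ast = |S|$ and $t_w^\ast = t_w(\GG^\ast)$.

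For the first identity, I would introduce for each candidate edge $e_i \in \Cplus$ the indicator $X_i$ that equals $1$ precisely when the $i$th coin lands heads, so that $X_i \sim \mathrm{Bernoulli}(\pi_i)$ and $\mathbb{E}[X_i] = \pi_i$. Since $k^\ast = \sum_{i=1}^{c} X_i$, linearity of expectation immediately yields $\mathbb{E}[k^\ast] = \sum_{i=1}^{c} \pi_i$.

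For the second identity, the crucial observation is that $\GG^\ast$ always contains every base edge and only a random subset of the candidates, so its spanning trees are exactly those spanning trees $\TT$ of $\GG_\bullet = ([n], \EEi \cup \Cplus)$ all of whose candidate edges happen to lie in $S$. I would therefore rewrite the weighted number of spanning trees as a sum over $\mathbb{T}_{\GG_\bullet}$ gated by an indicator, and take expectations using the independence of the coin flips,
\begin{equation}
  \mathbb{E}[t_w^\ast] = \sum_{\TT \in \mathbb{T}_{\GG_\bullet}} \mathbb{E}\!\left[\mathbbm{1}\big[\TT \cap \Cplus \subseteq S\big]\right] \prod_{e \in \TT} w(e) = \sum_{\TT \in \mathbb{T}_{\GG_\bullet}} \Big(\prod_{e_i \in \TT \cap \Cplus} \pi_i\Big) \prod_{e \in \TT} w(e),
\end{equation}
where the factorization $\Pr[\TT \cap \Cplus \subseteq S] = \prod_{e_i \in \TT \cap \Cplus} \pi_i$ uses that the coins are mutually independent and that every base edge is present with certainty (and so contributes no probability factor).

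The final step would match this sum with the determinant. By the generalized matrix-tree theorem, $\det \LL_w(\ppp)$ equals the weighted number of spanning trees of $\GG_\bullet$ under the weight function $w^\pi$, namely $\sum_{\TT \in \mathbb{T}_{\GG_\bullet}} \prod_{e \in \TT} w^\pi(e)$. Because $w^\pi$ multiplies each candidate weight by its selector $\pi_i$ while leaving each base weight unchanged, the value of every tree splits as $\prod_{e \in \TT} w^\pi(e) = \big(\prod_{e_i \in \TT \cap \Cplus} \pi_i\big) \prod_{e \in \TT} w(e)$, which is term-by-term identical to the summand above; hence $\mathbb{E}[t_w^\ast] = \det \LL_w(\ppp)$. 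I expect the only real subtlety to be the bookkeeping in this matching step—carefully partitioning each tree's edges into candidate and base edges and confirming that the always-present base edges contribute no probability factor—rather than any deep difficulty, since it is precisely the multiplicative form of the tree value function that lets the independent Bernoulli expectations and the weighted matrix-tree expansion coincide.
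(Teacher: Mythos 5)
Your proof is correct, and both steps are sound: the first identity follows by linearity of expectation over the Bernoulli indicators $X_i$, and the second by writing $t_w^\ast$ as the spanning-tree expansion over $\mathbb{T}_{\GG_\bullet}$ gated by the event $\TT\cap\Cplus\subseteq S$, using mutual independence to factor the probability as $\prod_{e_i\in\TT\cap\Cplus}\pi_i$, and matching terms with the matrix-tree expansion of $\det\LL_w(\ppp)$ under the weight function $w^\pi$.

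A direct comparison is complicated by the fact that the paper gives no proof of Theorem~\ref{th:random} at all---it defers to the technical report \cite{kasraArxiv16}---so there is no in-paper argument to measure against. The paper's footnote is informative, though: the report proves a \emph{generalized} version covering the setting of \cite{Joshi2009}, i.e., arbitrary linear sensor models in which the information matrix has the form $\LLi+\sum_i X_i w_i \aaa_i\aaa_i^{\top}$ for arbitrary vectors $\aaa_i$, where there is no underlying graph and hence no spanning-tree expansion available. The natural proof at that level of generality goes through multilinearity: each update is rank one, so $\det\bigl(\LLi+\sum_i t_i w_i\aaa_i\aaa_i^{\top}\bigr)$ is affine in each $t_i$ separately, and independence of the $X_i$ then lets the expectation pass inside, yielding $\mathbb{E}[\det(\cdot)]=\det\LL_w(\ppp)$ in one stroke. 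Your argument is the graph-specific instantiation of this fact: Cauchy--Binet applied to the incidence matrix (i.e., the matrix-tree theorem) makes the multilinear structure explicit as a sum over spanning trees. Your route buys a self-contained, combinatorially transparent proof in exactly the setting this paper needs, with the pleasant interpretation that each tree of $\GG_\bullet$ ``survives'' with probability equal to the product of its candidates' $\pi_i$; the multilinearity route buys generality (it does not care that the rank-one terms come from edges) and brevity. One small point to tighten in your write-up: the paper states the generalized matrix-tree theorem for strictly positive weights, whereas $w^\pi$ vanishes on candidates with $\pi_i=0$; the expansion $\det(\AAA\WW^\pi\AAAT)=\sum_{\TT\in\mathbb{T}_{\GG_\bullet}}\prod_{e\in\TT}w^\pi(e)$ still holds for nonnegative weights (it is just Cauchy--Binet, or follows by continuity), but your term-by-term matching implicitly relies on this, so it deserves a sentence.
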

\begin{proof}
  See \cite{kasraArxiv16} for the proof.\footnote{A generalized version of
  this theorem that covers the more
  general case of \cite{Joshi2009} is proved in \cite{kasraArxiv16}.}
\end{proof}
According to Theorem~\ref{th:random}, the randomized
algorithm described above on average selects $\sum_{i=1}^c \pi_i$ candidate edges and  
achieves $\det\LL_w(\ppp)$ weighted number of spanning trees in expectation. 
Note that these two terms appear in the constraints and objective of the
relaxed problem \ref{eq:conv1}, respectively. Therefore, the relaxed problem can
be interpreted as the problem of finding the optimal sampling probabilities
$\ppp$ for
the randomized algorithm described above.
This offers a new narrative:
\begin{corollary}
  The objective in \ref{eq:conv1} is to find
  the optimal probabilities $\ppp^\star$ for sampling edges from $\normalfont\Cplus$ such that the
  weighted number of spanning trees is maximized in \emph{expectation}, while the
  \emph{expected} number of newly selected edges is equal to $k$.
\end{corollary}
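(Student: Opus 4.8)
The plan is to read this corollary as an immediate \emph{interpretation} of Theorem~\ref{th:random} rather than as an independent fact requiring fresh machinery: every ingredient of the relaxed program \ref{eq:conv1} already carries a probabilistic meaning once we invoke the two identities established there. Concretely, I would match the objective and the constraint of \ref{eq:conv1} against the expectations $\mathbb{E}[t_w^\ast] = \det\LL_w(\ppp)$ and $\mathbb{E}[k^\ast] = \sum_{i=1}^c \pi_i$ supplied by the randomized coin-flip sampling scheme, and verify that these identifications remain valid across the whole feasible set.

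First I would deal with the objective. Since $\log$ is strictly increasing on $\Rset_{>0}$ and $\LL_w(\ppp)$ is positive definite for every $\ppp \in [0,1]^c$ (by the preceding lemma), maximizing $\log\det\LL_w(\ppp)$ is equivalent to maximizing $\det\LL_w(\ppp)$; the two programs share the same maximizer $\ppp^\star$. Theorem~\ref{th:random} then identifies $\det\LL_w(\ppp)$ with $\mathbb{E}[t_w^\ast]$, the \emph{expected} weighted number of spanning trees produced when each candidate edge $e_i \in \Cplus$ is retained independently with probability $\pi_i$. Hence the objective of \ref{eq:conv1} is precisely this expectation.

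Next I would handle the constraints. The box constraints $0 \le \pi_i \le 1$ are exactly what is needed for each $\pi_i$ to be a legitimate Bernoulli parameter, so the statistical interpretation of Theorem~\ref{th:random} is valid over the \emph{entire} feasible set of \ref{eq:conv1}, not merely at its integral points. Because every $\pi_i$ is nonnegative, the $\ell_1$-norm collapses to a plain sum, $\|\ppp\|_1 = \sum_{i=1}^c \pi_i$, which Theorem~\ref{th:random} equates with $\mathbb{E}[k^\ast]$. The constraint $\|\ppp\|_1 = k$ therefore states that the \emph{expected} number of newly selected candidate edges equals $k$, which is the remaining claim.

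Finally, I expect essentially no analytical obstacle here: the entire content of the corollary is the bookkeeping that aligns each algebraic object in \ref{eq:conv1} with its expectation under the sampling scheme. The only points meriting care are confirming that the feasible set of \ref{eq:conv1} is contained in $[0,1]^c$ so that the $\pi_i$'s remain valid probabilities throughout---guaranteed by the explicit box constraints---and that the passage from $\det$ to $\log\det$ preserves the optimizer, which follows from monotonicity of the logarithm.
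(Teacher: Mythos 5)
Your proposal is correct and follows essentially the same route as the paper: the corollary is read off directly from Theorem~\ref{th:random} by identifying the objective $\log\det\LL_w(\ppp)$ with (the logarithm of) $\mathbb{E}[t_w^\ast]$ and the constraint $\|\ppp\|_1 = k$ with $\mathbb{E}[k^\ast] = k$, exactly as the paper does in the paragraph preceding the corollary. Your added care about the box constraints guaranteeing valid Bernoulli parameters and the monotonicity of $\log$ preserving the maximizer is implicit in the paper's argument but does not change the approach.
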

In other words, \ref{eq:conv1} can be seen as a convex relaxation of
\ref{eq:conv0} at the expense of maximizing the objective and satisfying the
constraint, both \emph{in expectation}.
This new interpretation can be used as a basis for designing randomized rounding
procedures based on the randomized technique described above.  If one uses
$\ppp^\star$ (the fractional solution of the relaxed problem \ref{eq:conv1}) in
the aforementioned randomized rounding scheme, Theorem~\ref{th:random} ensures
that, on  average, such a method attains $\det\LL(\ppp^\star)$ by picking $k$
new edges in expectation.  Finally, we note that this new interpretation sheds
light on why the
deterministic rounding policy described earlier performs well in practice. Note
that randomly sampling
candidate edges with the probabilities in $\ppp^\star$ does not necessarily
result in a
feasible solution for \ref{eq:conv0bin}. That being said, consider every feasible outcome in which
exactly $k$ candidate edges are selected by the randomized algorithm with
probabilities in $\ppp^\star$. It is
easy to show that the deterministic procedure described earlier (picking
$k$ candidates with the largest $\pi^\star_i$'s) is in fact selecting the most probable
feasible outcome (given that exactly $k$ candidates have been selected).
\vspace{-0.3cm}
\subsubsection{Near-Optimality Certificates.}
It is impractical to
compute $\OPT$ via exhaustive search in large problems. Nevertheless, the approximation
algorithms described above yield lower and upper bounds for $\OPT$ that can be
quite tight in practice. Let $\tau^\star_\text{cvx}$ be the
optimum value of \ref{eq:conv1}. Moreover, let $\tau_\text{cvx}$ be the
suboptimal value obtained after rounding the solution of \ref{eq:conv1} (e.g., picking
the $k$ largest $\pi_i^\star$'s). The following corollary readily follows from
the analysis of the greedy and convex approximation algorithms.
  \begin{corollary}
	\label{cor:bound}
	\begin{equation}
	  \normalfont
	  \max\,\Big\{ \tau_\text{greedy},\tau_\text{cvx} \Big\}
	  \leq
	  \OPT
	  \leq
	  \min\,\Big\{ \mathcal{U}_\text{greedy},\tau^\star_\text{cvx} \Big\}
	  \label{eq:optbound}
	\end{equation}
	where $\mathcal{U}_\text{\normalfont greedy} \triangleq
	\zeta\tau_\text{\normalfont greedy} +
	  (1-\zeta)\tau_\text{\normalfont init}$ in which $\zeta \triangleq \eta^{-1} \approx 1.58$. 
  \end{corollary}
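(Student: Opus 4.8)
The plan is to verify the chain \eqref{eq:optbound} by establishing its four constituent bounds separately, each following directly from material already developed above. Since $\OPT$ is sandwiched between a maximum of two quantities on the left and a minimum of two on the right, it suffices to prove the four inequalities $\tau_\text{greedy} \leq \OPT$, $\tau_\text{cvx} \leq \OPT$, $\OPT \leq \tau^\star_\text{cvx}$, and $\OPT \leq \mathcal{U}_\text{greedy}$.

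First I would dispatch the two lower bounds. Both $\tau_\text{greedy}$ and $\tau_\text{cvx}$ are tree-connectivity values attained by genuine feasible solutions of $k$-$\espPlus$: the former is the output of the greedy algorithm, and the latter is obtained by rounding the relaxed solution to a selection of exactly $k$ candidate edges. Because $\OPT$ is by definition the maximum of $\logtree(\EEi \cup \EE)$ over all feasible $\EE$ with $|\EE| = k$, any feasible value is a lower bound, yielding both $\tau_\text{greedy} \leq \OPT$ and $\tau_\text{cvx} \leq \OPT$ at once.

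Next I would establish $\OPT \leq \tau^\star_\text{cvx}$. This is precisely the relaxation argument already spelled out when \ref{eq:conv1} was introduced: the feasible set of the integer program \ref{eq:conv0bin} (equivalently \ref{eq:conv0}) is contained in the feasible set of the convex relaxation \ref{eq:conv1}, and the two problems share the identical objective $\log\det\LL_w(\ppp)$. Enlarging the feasible set cannot decrease the maximum, so the optimal value $\tau^\star_\text{cvx}$ of the relaxation dominates $\OPT$.

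Finally, the bound $\OPT \leq \mathcal{U}_\text{greedy}$ is obtained by rearranging the greedy guarantee of the preceding corollary, $\tau_\text{greedy} \geq \eta\,\OPT + (1-\eta)\,\tau_\text{init}$. Solving for $\OPT$ and using $\zeta = \eta^{-1}$ gives $\OPT \leq \zeta\,\tau_\text{greedy} - \tfrac{1-\eta}{\eta}\,\tau_\text{init}$; since $\tfrac{1-\eta}{\eta} = \zeta - 1$, the right-hand side equals $\zeta\,\tau_\text{greedy} + (1-\zeta)\,\tau_\text{init} = \mathcal{U}_\text{greedy}$, as claimed. I do not anticipate any genuine obstacle here, as the corollary assembles four facts that are each immediate; the only point warranting explicit care is the elementary identity $\tfrac{1-\eta}{\eta} = \zeta - 1$ relating the Nemhauser constant $\eta = 1 - 1/e$ to $\zeta \approx 1.58$, which I would check directly to confirm the stated constant.
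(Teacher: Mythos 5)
Your proposal is correct and matches the paper's intended argument: the paper states that the corollary ``readily follows from the analysis of the greedy and convex approximation algorithms,'' and the four inequalities you verify---feasibility of the greedy and rounded solutions for the lower bounds, containment of the integer program's feasible set in that of \ref{eq:conv1} for $\OPT \leq \tau^\star_\text{cvx}$, and rearrangement of the Nemhauser-type guarantee $\tau_\text{greedy} \geq \eta\,\OPT + (1-\eta)\,\tau_\text{init}$ for $\OPT \leq \mathcal{U}_\text{greedy}$---are precisely the facts the paper is invoking. Your algebraic check that $\tfrac{1-\eta}{\eta} = \zeta - 1$ is also correct, so nothing is missing.
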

The bounds in Corollary~\ref{cor:bound} can be computed by running the greedy
and convex relaxation algorithms. Whenever $\OPT$ is beyond reach, the upper
bound can be
used to asses the quality of any feasible design.
Let $\mathcal{S}$ be an arbitrary $k$-subset of $\Cplus$ and $\tau_\mathcal{S}
\triangleq \logtree(\mathcal{S} \cup \EEi)$. $\mathcal{S}$ can be, e.g., the solution of
greedy algorithm, the solution of \ref{eq:conv1} after rounding, an existing
design (e.g., an existing pose-graph problem) or a suboptimal solution proposed
by a third party. Let $\mathcal{L}$ and $\mathcal{U}$
denote the lower and upper bounds in \eqref{eq:optbound}, respectively. From
Corollary~\ref{cor:bound} we have,
\begin{equation}
  \max \, \Big\{0,\mathcal{L}-\tau_{\mathcal{S}}\Big\} \leq \underbrace{\OPT -
	\tau_\mathcal{S}}_{\text{optimality gap}} \leq \mathcal{U} -
	\tau_\mathcal{S}.
  \label{}
\end{equation}
Therefore, $\mathcal{U} - \tau_\mathcal{S}$ (or similarly,
$\mathcal{U}/\tau_\mathcal{S} \geq \OPT/\tau_\mathcal{S}$) can be used as a
near-optimality certificate for an arbitrary design $\mathcal{S}$.
\vspace{-0.3cm}
\subsubsection{Two Weight Functions.} In the synthesis problem studied so far,
it was implicitly assumed that each edge is weighted by a single weight
function. This is not necessarily the case in SLAM, where each measurement has two components, each of
which has its own precision, i.e., $w_p$ and $w_\theta$
in \eqref{eq:SLAMth}.
Hence, we need to revisit the
synthesis problem in a more general setting, where multiple weight
functions assign weights, simultaneously, to a single edge. 
It turns out that the proposed approximation algorithms and their analyses can
be easily generalized to handle this case.
\begin{enumerate}[leftmargin=*]
  \item \emph{Greedy Algorithm}:
	For the greedy algorithm, we just need to replace $\mathcal{X}_{w}$ with
	$\mathcal{Y}_{w} : \EE \mapsto 2 \, \mathcal{X}_{w_p}(\EE) +
	\mathcal{X}_{w_\theta}(\EE)$; see
	\eqref{eq:SLAMth}. Note
	that $\mathcal{Y}_{w}$ is a linear combination of normalized monotone submodular
	functions with positive weights, and therefore is also normalized, monotone
	and submodular.
  \item \emph{Convex Relaxation}:
 The convex relaxation technique can be generalized to the
case of multi-weighted edges by replacing the concave objective function
$\log\det\LL_w(\ppp)$ with $2\,\log\det\LL_{w_p}(\ppp) +
\log\det\LL_{w_\theta}(\ppp)$, which is also concave.

\end{enumerate}

\begin{remark}
  Recall that our goal was to design sparse, yet reliable SLAM problems.
  So far we considered the problem of designing D-optimal SLAM problems with a
  given number of edges. The dual approach would be to find the sparsest SLAM problem such that
  the determinant of the estimation-error covariance is less than a
  desired threshold.
  Take for example the following scenario: find the sparsest SLAM problem by
  selecting loop-closure measurements from a given set of candidates such that
  the resulting D-criterion is $50\%$ smaller than that of dead reckoning. The
  dual problem can be written as,
  \begin{equation}
	\begin{aligned}
	  & \underset{\EE \subseteq \Cplus}{\text{minimize}}
	  & & |\EE| \quad \text{\normalsize subject to} \quad \mathcal{X}_{w}(\EE)
	  \geq \tau_{\textsf{min}}.
	\end{aligned}
	\label{eq:dual1}
  \end{equation}
  in which $\tau_{\textsf{min}}$ is given. In \cite{kasraArxiv16} we have shown that our proposed
  approximation algorithms and their analyses can be easily modified to address the dual
  problem. Due to space limitation, we have to refrain from discussing the dual
  problem in this paper.
\end{remark}
\vspace{-18pt}
\subsection{Experimental Results}
\begin{figure}[t]
  \centering
  \begin{subfigure}[t]{0.48\textwidth}
    \includegraphics[width=\textwidth]{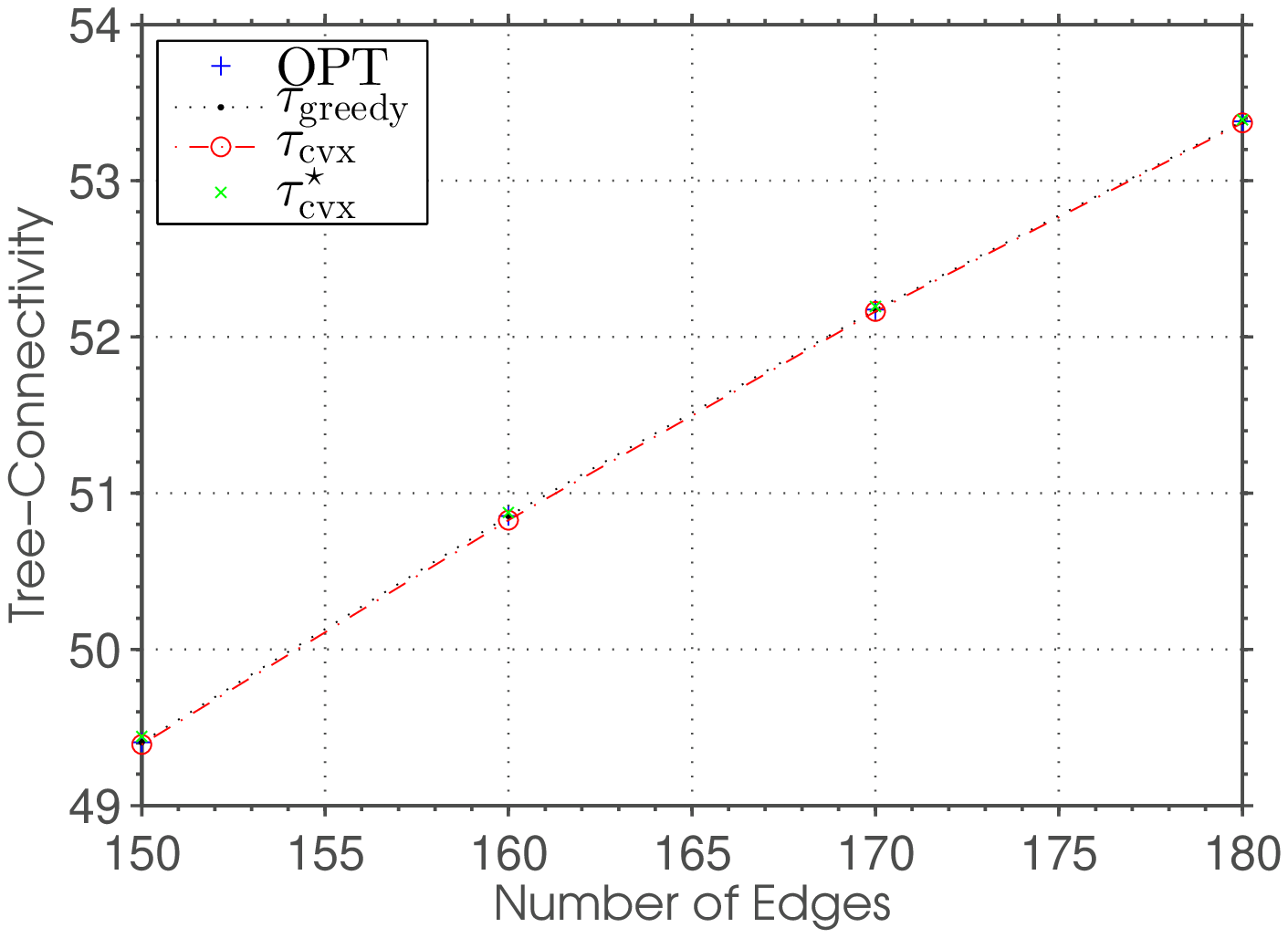}
    \caption{\small $n = 20$, $k = 5$}
      \label{fig:varEdgesOpt}
    \end{subfigure}
    ~
  \begin{subfigure}[t]{0.48\textwidth}
    \includegraphics[width=\textwidth]{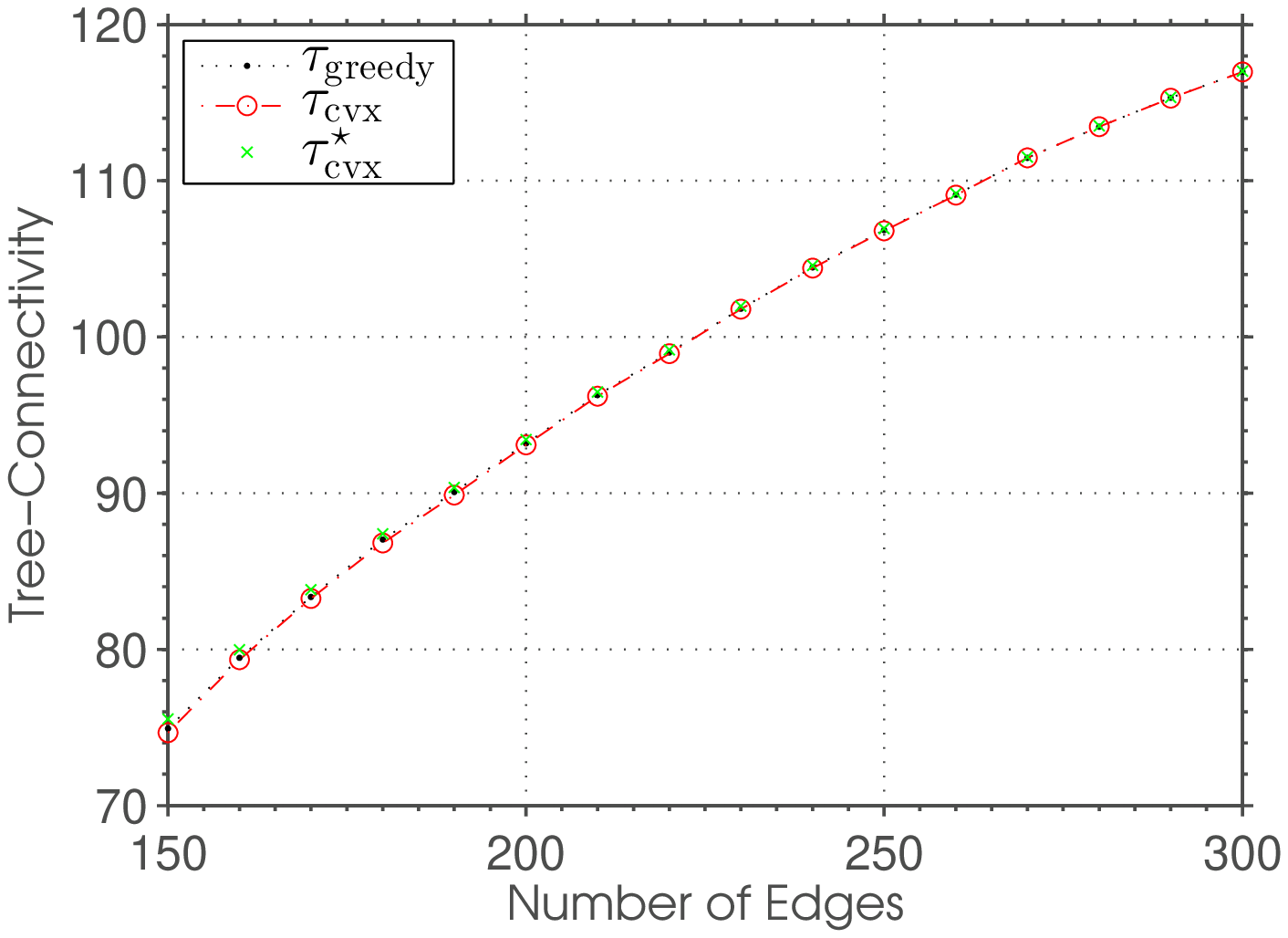}
    \caption{\small $n=50$, $k = 5$}
    \label{fig:varEdges}
  \end{subfigure}
  \\
  \begin{subfigure}[t]{0.5\textwidth}
    \includegraphics[width=\textwidth]{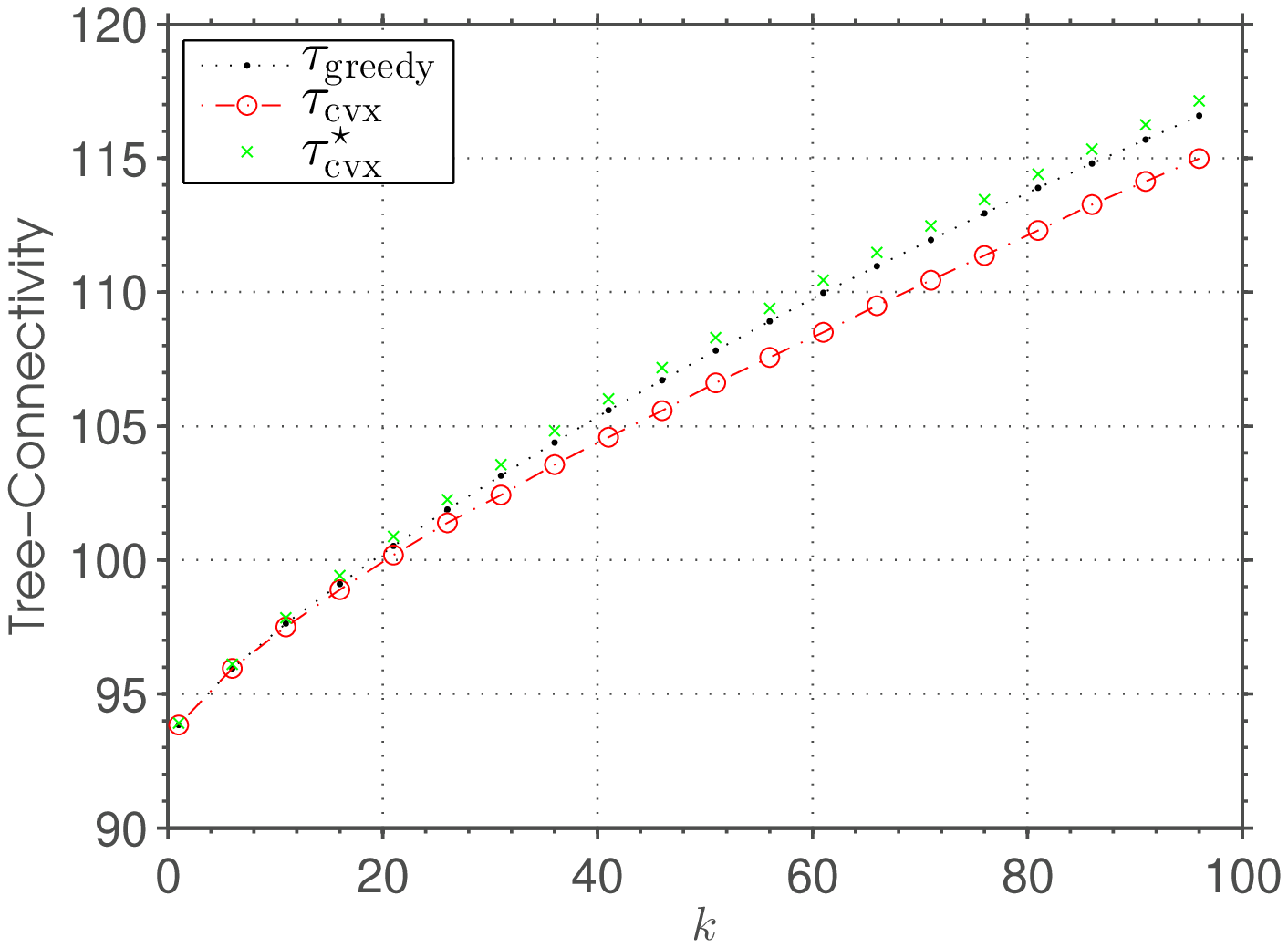}
    \caption{\small $n = 50$, $|\EEi| = 200$}
    \label{fig:varK}
  \end{subfigure}
  \caption{\small $k$-$\espPlus$ on randomly generated graphs with $\Cplus =
	\binom{[n]}{2} \setminus \EEi$.}
  \label{fig:esps}
\end{figure}
The proposed algorithms were implemented in MATLAB.
\ref{eq:conv1}
is modelled using YALMIP \cite{YALMIP} and solved
using SDPT$3$ \cite{tutuncu2003solving}. 
\subsubsection{Random Graphs.} Figure~\ref{fig:esps} illustrates the performance of our approximate
algorithms in randomly generated graphs. 
The set of candidates in these experiments is $\Cplus = \binom{[n]}{2} \setminus \EEi$.
Figures~\ref{fig:varEdgesOpt} and \ref{fig:varEdges} show the resulting tree-connectivity as a
function of the number of randomly generated edges for a fixed $k=5$ and,
respectively, $n = 20$ and $n = 50$. 
Our results indicate that both algorithms exhibit
remarkable performances for $k = 5$. Note that computing $\mathsf{OPT}$ by exhaustive search is only feasible in
small instances such as Figure~\ref{fig:varEdgesOpt}. Nevertheless,
computing the exact $\mathsf{OPT}$ is not crucial for evaluating our approximate
algorithms, as Corollary~\ref{cor:bound} guarantees that $ \tau^{\star}_{\text{greedy}}\leq \OPT \leq
{\tau}^\star_{\text{cvx}}$;
i.e., the space between each black $\mathrm{\cdot}$ and the corresponding
\textcolor{green!60!black}{green} $\times$.
Figure~\ref{fig:varK} shows the results obtained for varying $k$. The optimality
gap for $\tau_\text{cvx}$ gradually grows as the planning horizon $k$
increases.  Our greedy algorithm, however, still yields a near-optimal
approximation. 
\subsubsection{Real Pose-Graph Dataset.}
We also evaluated the proposed algorithms on the Intel Research Lab dataset as a popular
pose-graph SLAM benchmark.\footnote{\url{https://svn.openslam.org/data/svn/g2o/trunk/data/2d/intel/intel.g2o}} In this scenario, $\EEi$ is chosen to be the set of
odometry edges, and $\Cplus$ is the set of loop closures. The parameters in this
graph are $n = 943$, $|\EEi| = 942$ and $|\Cplus| = 895$. Note that computing the true
$\OPT$ via exhaustive search is clearly impractical; e.g., for $k=100$, there are more
than $10^{134}$ possible graphs. For the edge weights, we are using the original
information (precisions) reported in the dataset. Since the translational and
rotational measurements have different precisions, two weight functions---$w_p$
and $w_\theta$---assign weights to each edge of the graph, and the objective is
to maximize $2\,\tau_{w_p}(\GG) +
\tau_{w_\theta}(\GG)$. Figure~\ref{fig:intel} shows the resulting objective
value for the greedy and convex relaxation approximation algorithms, as well as
the upper bounds ($\mathcal{U}$) in Corollary~\ref{cor:bound}.\footnote{See also
  \url{https://youtu.be/5JZF2QiRbDE} for a visualization.}
According to
Figure~\ref{fig:intel}, both algorithms have successfully found near-$t$-optimal
(near-D-optimal) designs. The greedy algorithm has outperformed the convex relaxation
with the simple deterministic (sorting) rounding procedure. For small values of
$k$, the upper bound $\mathcal{U}$ on $\OPT$ is given by
$\mathcal{U}_\text{greedy}$ (\textcolor{blue}{blue} curve). However, for
$k \geq 60 $, the convex relaxation provides a significantly tighter upper bound on
$\OPT$ (\textcolor{green!60!black}{green} curve).
In this dataset, YALMIP+SDPT3 on an
Intel Core i5-2400 operating at 3.1 GHz can solve the convex program in about $20$-$50$
seconds, while a naive implementation of the greedy algorithm (without using
rank-one updates) can solve the case with $k=400$ in about $25$ seconds.

\begin{figure}[t]
  \centering
  \begin{subfigure}[t]{0.48\textwidth}
	\includegraphics[width=\textwidth]{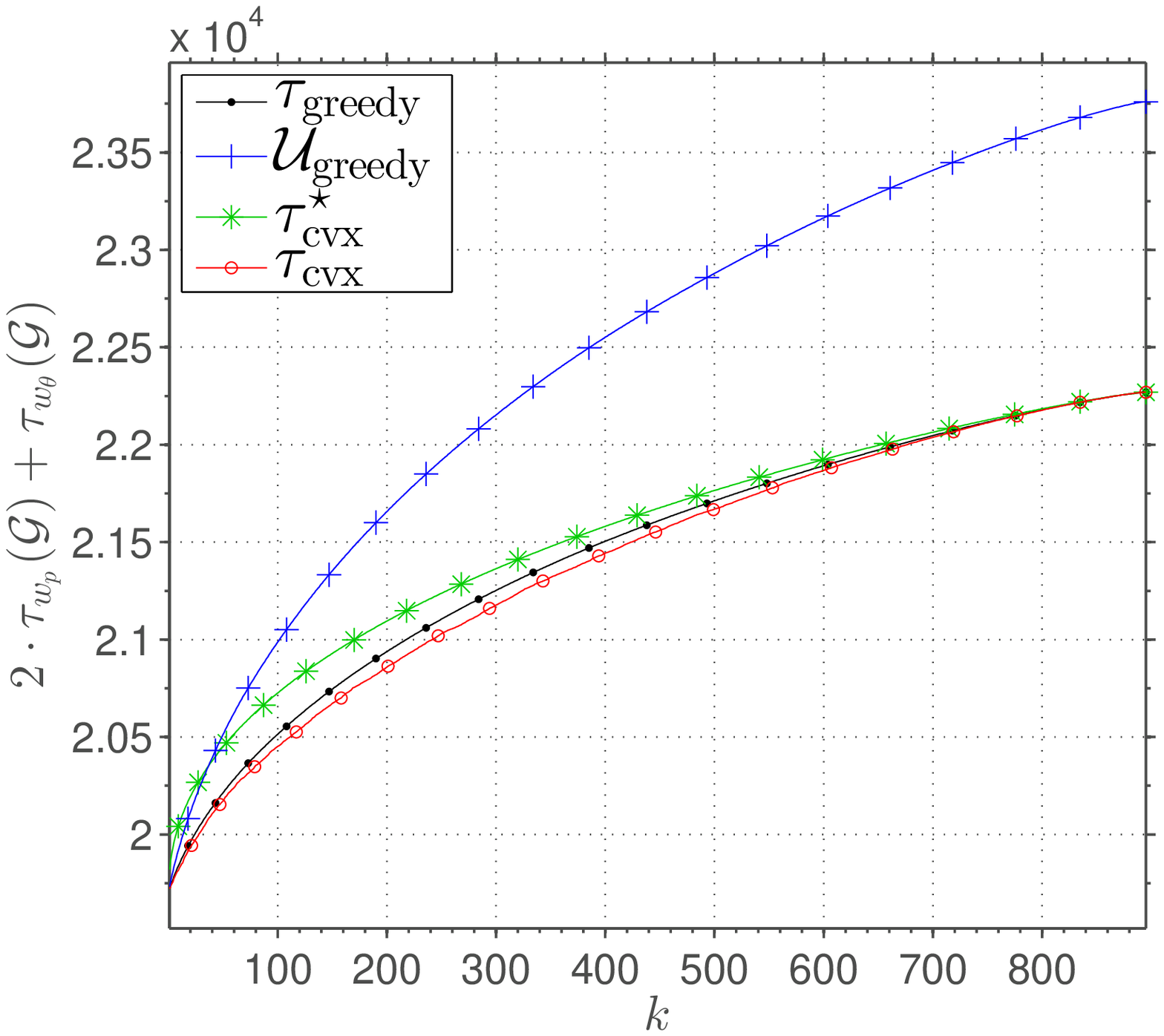}
	\caption{Performance of the proposed approximation algorithms.}
	\label{fig:intel1}
  \end{subfigure}
  ~
  \begin{subfigure}[t]{0.42\textwidth}
	\includegraphics[width=\textwidth]{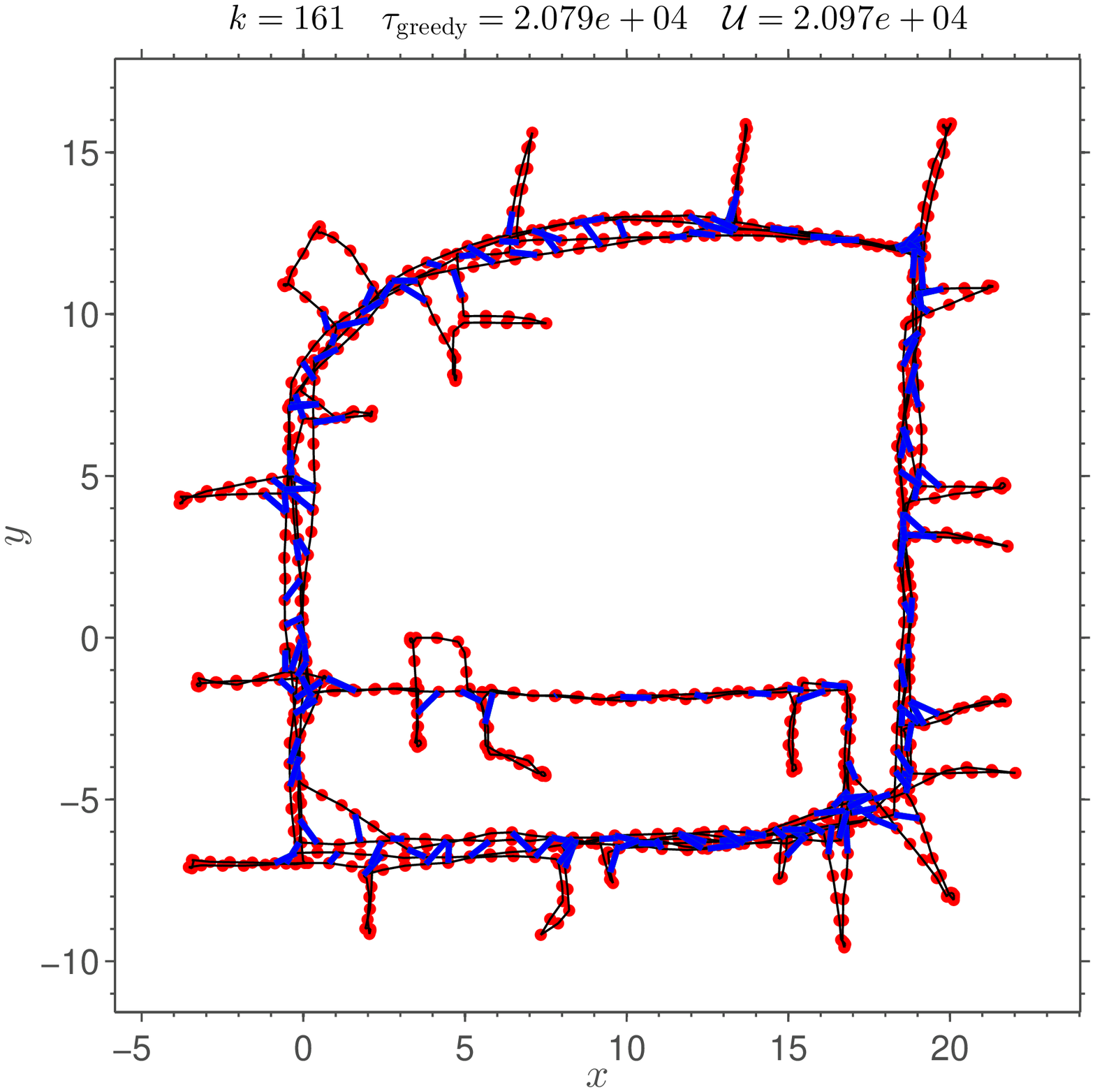}
	\caption{Greedy design for $k=161$ loop closures (out of $895$). Loop-closure
	edges are shown in blue.}
	\label{fig:intel2}
  \end{subfigure}
	\caption{$k$-$\espPlus$ for pose-graph SLAM on the Intel Research Lab dataset.}
	\label{fig:intel}
\end{figure}
\section{Conclusion}
\label{sec:conclusion}
We presented a graph-theoretic approach to the problem 
of designing sparse reliable (i.e., near-D-optimal)
pose-graph SLAM. This paper demonstrated that this problem boils down to a combinatorial
optimization problem whose goal is to find a sparse graph with the maximum
weighted number of spanning trees. The problem of characterizing $t$-optimal graphs
is an open problem with---to the best of our knowledge---no known efficient
algorithm. We designed two efficient approximation algorithms with
provable guarantees and near-optimality certificates.
First and foremost, we introduced a new submodular graph invariant, i.e.,
weighted tree-connectivity. This was used to guarantee that the greedy
algorithm is a constant-factor approximation algorithm for this problem with a
factor of $(1-1/e)$ (up to a constant normalizer). In another approach, we formulated the original
combinatorial optimization problem as
an integer program that admits a natural convex relaxation. We discussed deterministic and
randomized rounding schemes. Our analysis sheds light on the
connection between the original and the relaxed problems.
Finally, we evaluated the performance of the proposed approximation algorithms using random graphs
and a real pose-graph SLAM dataset.
Although this paper specifically targeted SLAM, we note that the proposed algorithms can be
readily used to synthesize near-$t$-optimal graphs in any domain where maximizing
tree-connectivity is useful. See, e.g.,
\cite{chemistryGraph,kim2013network,boesch2009survey,kasraArxiv16} for applications in 
Chemistry, RNA modelling, network reliability under
random link failure and estimation over sensor networks, respectively.
\bibliographystyle{splncs_srt}
\bibliography{graph,slam}
\end{document}